\newtheorem{lemma}{Lemma}
\newtheorem{prop}{Proposition}
\begin{document}

\title{Theoretically informed selection of latent activation in autoencoder based recommender systems}

\author{\IEEEauthorblockN{Aviad Susman\IEEEauthorrefmark{1}}
\IEEEauthorblockA{\IEEEauthorrefmark{1}Department of Genetics and Genomic Sciences, Icahn School of Medicine at Mount Sinai, New York, NY 10029\\ Email: aviad.susman@mssm.edu}}

\maketitle
\IEEEpeerreviewmaketitle

\section*{Abstract}
Autoencoders may lend themselves to the design of more accurate and computationally efficient recommender systems by distilling sparse high-dimensional data into dense lower-dimensional latent representations. However, designing these systems remains challenging due to the lack of theoretical guidance. This work addresses this by identifying three key mathematical properties that the encoder in an autoencoder should exhibit to improve recommendation accuracy: (1) dimensionality reduction, (2) preservation of similarity ordering in dot product comparisons, and (3) preservation of non-zero vectors. Through theoretical analysis, we demonstrate that common activation functions, such as ReLU and tanh, cannot fulfill these properties jointly within a generalizable framework. In contrast, sigmoid-like activations emerge as suitable choices for latent activations. This theoretically informed approach offers a more systematic method for hyperparameter selection, enhancing the efficiency of model design.

\section{Introduction}
A general problem in deep learning is the effective design of neural networks. Once an overall method has been chosen for the problem at hand (convolutional model, autoencoder, recurrent model, etc.), further decisions regarding network depth, layer size, and activation functions - among others - remain [Goodfellow]. While an optimal architecture can be estimated with various hyperparameter optimization techniques [Bergstra, Snoek], theoretical guarantees regarding which design choices may be optimal or not, are rare [Zoph]. However, if appropriate mathematical constraints on the architecture are extracted from the desired behavior of the ultimate model, these conditions can be leveraged to make informed and rigorous design choices. Examples of this include recommender systems that make use of autoencoders on user and item feature vectors. Rather than recommending items to users based on features extracted via matrix factorization and other domain relevant processing, these systems pass the user and item feature vectors to an autoencoder for further compression and information distillation before computing comparisons via a dot product or cosine similarity measure on the resultant latent representations (Figure 1). Such systems are used in various application areas ranging from drug discovery [Way] to content recommendation [Cao]. The encoder networks within the autoencoders that are used in these systems have three desired mathematical properties that are inspired by the ultimate task of dot product similarity based comparison. These properties can be leveraged to mathematically determine further properties that may contribute to theoretically informed decision making within their hyperparameter design. Specifically, in this work, we show that three desirable mathematical properties of encoders within recommender systems cannot jointly be fulfilled by a generalizable model that uses a $ReLU$-like or $tanh$-like activation in its latent layer. Among the common choices of activation functions, those that are $sigmoid$-like remain viable.

\begin{figure}[ht]
\centering
\includegraphics[width=0.5\textwidth]{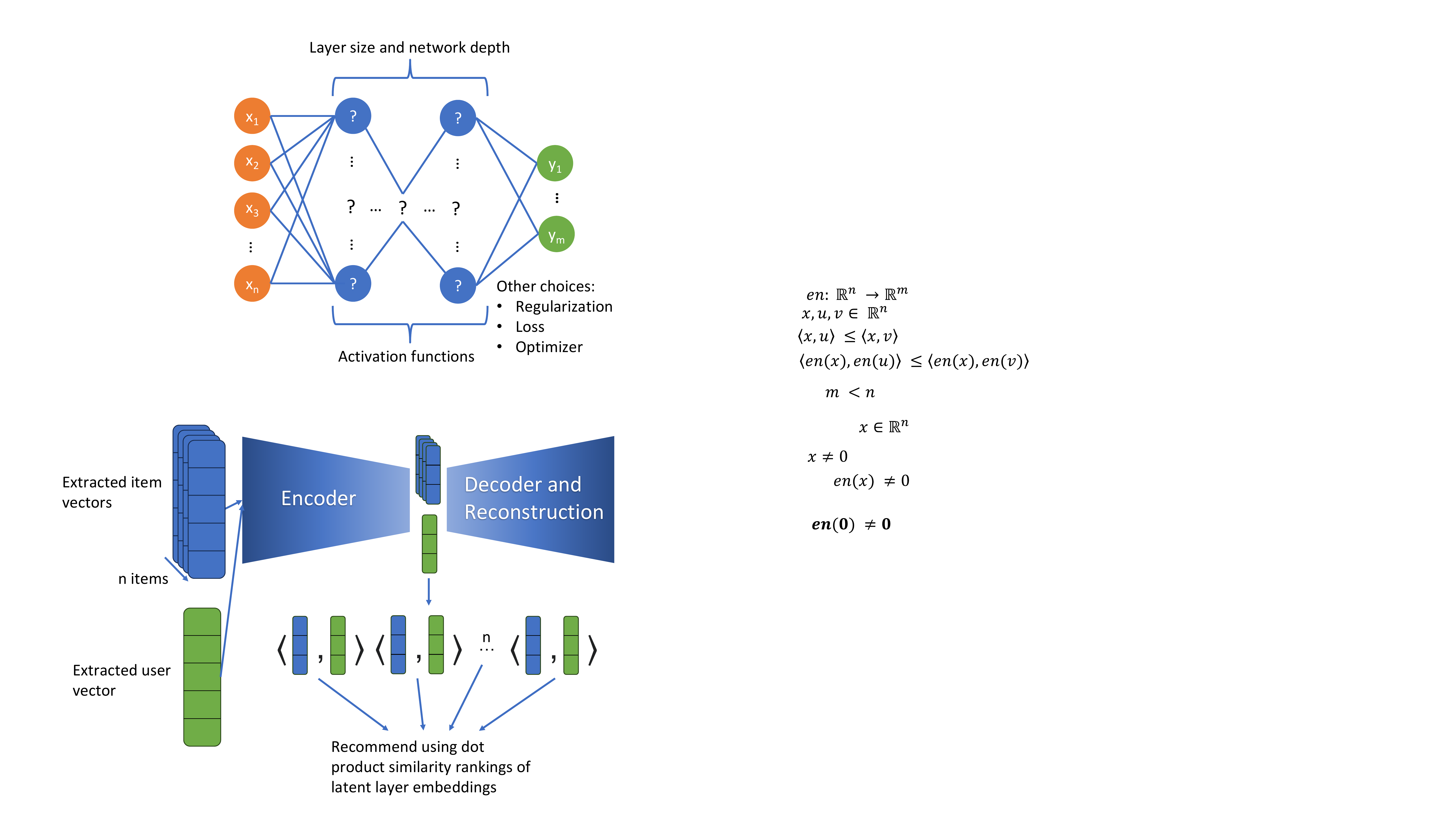}
\caption{An autoencoder used within a recommender system for decreased computational complexity and increased accuracy.}
\label{fig:autoencoder}
\end{figure}

\section{Desired mathematical behavior}
If the encoder network of an autoencoder within a recommender system is a function $enc: \mathbb{R}^n\to\mathbb{R}^m$, then we desire that:

\vspace{0.1 in}
\begin{enumerate}
  \item $m \leq n$.

This is a defining property of autoencoders and not unique to the system we are considering. Dot product similarity based recommender systems may elect to utilize autoencoders that reduce the dimensionality of their inputs in order to avoid comparing items on extraneous features that the encoder can zero out. This may contribute to increased accuracy. Additionally, they are used to reduce computational complexity when the number of items being compared is large or their representations are long vectors [Duong].
\newpage 
  \item If $x,u,v\in\mathbb{R}^n$ and $\langle x, u\rangle\leq\langle x,v\rangle$ then $$\langle enc(x), enc(u)\rangle\leq\langle enc(x),enc(v)\rangle.$$

Here, $x$ can be thought of as a user vector within a recommender system and $u$ and $v$ as item vectors that we may want to recommend to used $x$ based on their dot product similarities. The property of preserving the ordering of similarities to a fixed target vector is important for ensuring that the encoder produces representations that are still useful to the ultimate recommendation task [Li, Liang, Noshad]. 
    
    \item If $x\neq0$ then $enc(x)\neq0$.

This property is essential to the learning task as it maintains the function of dot product similarity based recommendation. Preserving non-zero vectors prevents the similarity rankings for that input from collapsing [Zhang].
\end{enumerate}

\section{Results}
\begin{lemma}
Let $f: \mathbb{R}^n \to \mathbb{R}^m$ be a function that preserves orthogonality. That is, $\langle u,v\rangle = 0 \implies \langle f(u),f(v)\rangle = 0$ for all $u,v\in\mathbb{R}^n$. Additionally, $f$ preserves non-zero vectors namely $x\neq 0\implies f(x) \neq 0$ for all $x\in\mathbb{R}^n$. Then $m\geq n$.
\end{lemma}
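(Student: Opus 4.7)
The plan is to exhibit $n$ linearly independent vectors in the image of $f$, forcing the ambient space $\mathbb{R}^m$ to have dimension at least $n$. The natural candidates come from applying $f$ to an orthogonal basis of $\mathbb{R}^n$, since orthogonality is exactly the structure $f$ is assumed to respect.

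Concretely, I would start by fixing the standard basis $e_1,\dots,e_n$ of $\mathbb{R}^n$ (any orthogonal basis would do). Since $\langle e_i, e_j\rangle = 0$ for $i\neq j$, the orthogonality-preservation hypothesis gives $\langle f(e_i), f(e_j)\rangle = 0$ for all $i\neq j$. Since each $e_i\neq 0$, the non-zero-preservation hypothesis gives $f(e_i)\neq 0$ for every $i$.

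Next, I would invoke the standard linear-algebra fact that a collection of pairwise orthogonal non-zero vectors in an inner-product space is linearly independent: if $\sum_i c_i f(e_i) = 0$, taking the inner product with $f(e_j)$ isolates $c_j \|f(e_j)\|^2 = 0$, and since $f(e_j)\neq 0$ we conclude $c_j=0$. Thus $f(e_1),\dots,f(e_n)$ are $n$ linearly independent vectors in $\mathbb{R}^m$, which forces $m\geq n$.

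I do not anticipate a genuine obstacle here; the argument is essentially a one-line appeal to the fact that orthonormal-like families are independent. The only subtlety worth flagging is that $f$ is not assumed linear, so the conclusion really is about the dimension of the ambient codomain rather than the dimension of any ``image subspace''; the independence of the $n$ image vectors is what directly bounds $m$. A brief remark to that effect will make the proof self-contained.
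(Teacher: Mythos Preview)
Your proposal is correct and follows essentially the same route as the paper: apply $f$ to an orthogonal basis, use the two hypotheses to obtain $n$ pairwise orthogonal non-zero vectors in $\mathbb{R}^m$, verify linear independence via the inner-product trick, and conclude $m\geq n$. The only cosmetic difference is that the paper phrases it for an arbitrary orthogonal basis while you instantiate with the standard one, which you already note is inessential.
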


\noindent \begin{proof}
    Let $\{b_i\}_{i=1}^n$ be any orthogonal basis of $\mathbb{R}^n$. If $f$ preserves orthogonality then the set $\{f(b_i)\}_{i=1}^n$ is a set of mutually orthogonal non-zero vectors in $\mathbb{R}^m$ and so it is linearly independent via the following argument. If $\{a_i\}_{i=1}^n$ is a set of real numbers and $\sum_{i=1}^n a_if(b_i) = 0$, then for any $1\leq j\leq n$,
    $$\langle f(b_j),\sum_{i=1}^n a_if(b_i)\rangle = \langle f(b_j),0\rangle$$
    and since $f(b_j)$ is non-zero and orthogonal to all other vectors in the sum,
    $$a_j||f(b_j)|| = 0 \implies a_j = 0.$$
    Trivially, to have a set of $n$ linearly independent vectors in $\mathbb{R}^m$, it must be the case that $m\geq n$.
\end{proof}

\vspace{0.1 in}

We employ this lemma to prove a result regarding functions that behave like the autoencoder described above. In the following proposition, our main result, $f$ represents the dimension reducing function implemented by the encoder network, and the vector $x$ is the fixed user vector.

\begin{prop}
    If $f:\mathbb{R}^n\to\mathbb{R}^m$ is a function such that 
    $$\langle x,u\rangle \leq \langle x,v\rangle \implies \langle f(x),f(u)\rangle \leq \langle f(x),f(v)\rangle$$ and
    $$x \neq 0 \implies f(x) \neq 0$$
    for all $x,u,v\in\mathbb{R}^n$, and $m < n$, it must be the case that $f(0) \neq 0$.
\end{prop}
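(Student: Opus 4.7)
The plan is to prove the proposition by contrapositive: assume $f(0) = 0$ and derive $m \geq n$ via the Lemma, contradicting $m < n$. So the whole task reduces to showing that, under the ordering hypothesis, the extra assumption $f(0) = 0$ forces $f$ to preserve orthogonality; the Lemma then does the rest, since $f$ is already assumed to preserve non-zero vectors.

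To get orthogonality preservation from the ordering hypothesis, my key move would be to use the zero vector as a ``sandwich'' comparison point. Suppose $\langle x, u \rangle = 0$. Since $\langle x, 0 \rangle = 0$ as well, I have both $\langle x, 0 \rangle \leq \langle x, u \rangle$ and $\langle x, u \rangle \leq \langle x, 0 \rangle$. Applying the ordering hypothesis in each direction yields
\[
\langle f(x), f(0) \rangle \leq \langle f(x), f(u) \rangle \quad\text{and}\quad \langle f(x), f(u) \rangle \leq \langle f(x), f(0) \rangle,
\]
so $\langle f(x), f(u) \rangle = \langle f(x), f(0) \rangle$. Under the assumption $f(0) = 0$ the right-hand side is $0$, establishing $\langle f(x), f(u) \rangle = 0$. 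Hence $f$ preserves orthogonality.

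With orthogonality preservation in hand, and non-zero preservation given by hypothesis, the Lemma yields $m \geq n$, contradicting $m < n$. Therefore the assumption $f(0) = 0$ is untenable, and we conclude $f(0) \neq 0$.

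The main conceptual obstacle I expect is precisely the step of realizing that the right auxiliary vector to compare $u$ against is $0$ itself rather than, say, $-u$ or some scalar multiple of $u$. A first attempt might try to exploit symmetry via $\pm u$, which only yields $\langle f(x), f(u)\rangle = \langle f(x), f(-u)\rangle$ and does not pin either value to zero. Once one notices that $0$ is simultaneously orthogonal to every $x$ and is the only input whose image is forced to be zero under the extra assumption, the argument becomes essentially a two-line sandwich. The rest is a mechanical appeal to the previously proved Lemma.
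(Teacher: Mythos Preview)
Your proof is correct and follows essentially the same approach as the paper: both use the zero vector as the sandwich comparison to deduce $\langle f(x),f(u)\rangle=\langle f(x),f(0)\rangle$ whenever $\langle x,u\rangle=0$, and then invoke the Lemma. The only cosmetic difference is that the paper first records the general identity $\langle f(0),f(u)\rangle=\|f(0)\|^{2}$ (noting the image lies in a hyperplane) before specializing to $f(0)=0$, whereas you assume $f(0)=0$ at the outset and proceed directly by contrapositive.
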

\begin{proof}
Trivially, 
$$\langle 0,0 \rangle \leq \langle 0,u \rangle \leq \langle 0,0 \rangle,$$ and so
\begin{equation}
    \langle f(0), f(u)\rangle = ||f(0)||^2 \text{ for every } u\in\mathbb{R}^n.
\end{equation}
In other words, $Im(f)$ lies in the hyperplane determined by $(f(0), ||f(0)||^2)$. If $u,v$ are two orthogonal vectors in $\mathbb{R}^n$, then
$$\langle u,0 \rangle \leq \langle u,v \rangle \leq \langle u,0 \rangle,$$
and so
$$\langle f(u), f(v)\rangle = \langle f(u), f(0)\rangle \overset{\text{Eq. 1}}{=} ||f(0)||^2.$$ So, if $f(0) = 0$, then, by Lemma 1, $f$ preserves orthogonality and so $m\geq n$. So, in order for $f$ to not preserve orthogonality and be dimension-reducing, it must be nonzero everywhere.
\end{proof}
\vspace{0.1 in}
The encoder network within a trained autoencoder that reduces the dimensionality of its inputs, preserves the order of their dot products relative to a fixed input vector, and preserves non-zero vectors, is an example of a function that satisfies the conditions of Proposition 1. In order to enforce that the encoder is non-zero everywhere (i.e., at zero), a $sigmoid$-like activation in the latent space is an appropriate choice for an activation function, as the $sigmoid$ function is positive. In contrast, other common activation functions, such as $ReLU$ and $tanh$-based activations have zeros, and thus are not design choices that lend themselves to generalizable models.

\section{Conclusion}
We showed that utilizing mathematical constraints of an autoencoder's behavior can help inform the design of its architecture. Specifically, this can be done in regards to the choice of the latent space activation function, which is often made either arbitrarily or sub-optimally with the aid of costly numerical methods. We showed that if the conditions of dimension reduction, inner product order preservation, and preserving non-zero vectors are imposed, then non-vanishing latent space activation functions are appropriate. Further work towards uncovering desirable mathematical behavior of models in more general problem settings may yield additional deep learning design principles. Future work may also include exploring transformations of activations with zeros to restore their suitability for the problem setup being considered. Additionally, we intend to gather empirical evidence to support the theoretical claims of this work. 

\section*{Acknowledgments}
We thank Joseph Colonel, Jamie Bennett, and Gaurav Pandey for their technical assistance. This work was funded by National Institute of Health grant number R01HG011407.

\newpage


\begin{thebibliography}{1}

\bibitem{good}
Goodfellow, I., Bengio, Y., Courville, A. (2016). Deep Learning. MIT Press.

\bibitem{berg}
Bergstra, J., et al. (2011). "Algorithms for hyper-parameter optimization." Advances in Neural Information Processing Systems 24.

\bibitem{snoek}
Snoek, J., Larochelle, H., Adams, R.P. (2012). "Practical Bayesian optimization of machine learning algorithms." Advances in Neural Information Processing Systems 25: 2951-2959.

\bibitem{Zoph}
Zoph, B., Le, Q. (2016) "Neural Architecture Search with Reinforcement Learning." doi: 10.48550

\bibitem{Duong}
Duong, T. N., Vuong, T. A., Nguyen, D. M. and Dang, Q. H., "Utilizing an Autoencoder-Generated Item Representation in Hybrid Recommendation System." IEEE Access, vol. 8

\bibitem{Lang}
Liang, H. and Baldwin, T. (2015). "A Probabilistic Rating Auto-encoder for Personalized Recommender Systems." Proceedings of the 24th ACM International on Conference on Information and Knowledge Management (CIKM '15)

\bibitem{Noshad}
Noshad, Z., Bouyer, A., Noshad, M. (2021) "Mutual information-based recommender system using autoencoder." Applied Soft Computing, Volume 109

\bibitem{Way}
Way, G.P., Zietz, M., Rubinetti, V. et al. (2020) "Compressing gene expression data using multiple latent space dimensionalities learns complementary biological representations." Genome Biol 21, 109

\bibitem{Cao}
Cao, S., Yang, N., and Liu, Z. (2017) "Online news recommender based on stacked auto-encoder." IEEE/ACIS 16th International Conference on Computer and Information Science (ICIS)

\bibitem{Li}
Li, X., She, J. (2017) "Collaborative Variational Autoencoder for Recommender Systems." Association for Computing Machinery, doi: 10.1145/3097983.3098077

\bibitem{Zhang}
Zhang, G., Liu, Y., Jin, X. (2020) "A survey of autoencoder-based recommender systems." Frontiers of Computer Science, 14, 430-450.

\end{thebibliography}
\end{document}